
\documentclass[11pt,twocolumn]{article}

\usepackage{microtype}
\usepackage{graphicx}
\usepackage{subfigure}
\usepackage{booktabs} 
\usepackage{ntheorem}
\usepackage{hyperref}
\usepackage{amsmath}
\usepackage{amsfonts}


\RequirePackage{algorithm}
\RequirePackage{algorithmic}
\newtheorem*{proof}{Proof}
\newtheorem{prop}{Proposition}



\title{Robust spectral clustering using LASSO regularization}




\author{{\normalsize Champion Camille$^{1}$, Blazère Mélanie$^1$, Burcelin Rémy$^2$, Loubes Jean-Michel$^1$, Risser Laurent$^3$}
\\
{\normalsize $^1$ Toulouse Mathematics Institute (UMR 5219)}
\\
{\normalsize  University of Toulouse \ F-31062 Toulouse, France}
\\
{\normalsize $^2$ Metabolic and Cardiovascular Diseases Institute (UMR 1048)}
\\
{\normalsize University of Toulouse \ F-31432 Toulouse, France}
\\
{\normalsize $^3$ Toulouse Mathematics Institute (UMR 5219)}
\\
{\normalsize  CNRS \ F-31062 Toulouse, France}
}

\date{}

\begin{document}
\maketitle

\begin{abstract}
Cluster structure detection is a fundamental task for the analysis of graphs, in order to understand and to visualize their functional characteristics. Among the different cluster structure detection methods, spectral clustering is currently one of the most widely used due to its speed and simplicity. Yet, there are few theoretical guarantee to recover the underlying partitions of the graph for general models. This paper therefore presents a variant of spectral clustering, called $\ell_1$-spectral clustering, performed on a new random model closely related to stochastic block model. Its goal is to   promote a sparse eigenbasis solution of a $\ell_1$ minimization problem revealing the natural structure of the graph. The effectiveness and the robustness to small noise perturbations of our technique is confirmed through a collection of simulated and real data examples.
\end{abstract}

\vspace{0.4cm}
\noindent
{\it Keywords:} Spectral clustering, community detection, eigenvectors basis, $\ell_1$-penalty.

\vfill




\section{Introduction}\label{section1}

Graphs play a central role in complex systems  as they can conveniently model  interactions between the variables of a system.  Finding variable sets with similar attributes can then help  understanding the mechanisms underlying a complex system.
Graphs  are commonly used in  a wide range of applications, ranging from Mathematics (graph theory) to Physics \cite{Hopfield82}, Social Networks \cite{Handcock10}, Informatics  \cite{Pastor07} or Biology  \cite{Jeong00,Meunier09}. For instance, in genetics, groups of genes with high interactions are likely to be involved in a same function that drives a specific biological process.

One of the most relevant features when analyzing graphs is cluster structures.  Clusters are generally  defined as  connected subsets of nodes that are more densely connected to each other than to the rest of the graph. 
Different strategies make it possible to define more specifically variable clusters  depending on whether this property of vertices is considered locally (on a connected subset of vertices) or globally (on the whole network). 
First, cliques (subset of vertices such that every two distinct vertices in the clique are adjacent)\cite{Wasserman94}, n-clique (maximal subgraph such that the distance of each pair of its vertices is not larger than n) \cite{Wasserman94} and  k-core (maximal connected subgraph of G in which all vertices have degree at least k) \cite{Seidman83} characterize   local cluster structure.
Secondly, one of the global cluster structure definition is based on the notion of  modularity \cite{Newman04, Newman06} that quantifies the extent to which   the fraction of the edges that fall within the given groups differs from the expected fraction if edges were distributed at random.
The most popular random model is proposed by  \cite{Newman04}, where edges are reconnected randomly, under the constraint that the expected degree of each vertex corresponds to the degree of the vertex in the original graph.
 The last definition of cluster structure, and the most natural is related to  similarity between each pair of vertices, that includes local or global definitions of a cluster structure.  It is really natural to assume that cluster structures are groups of vertices that are close to each other.  Similarity measures are the foundations of traditional methods as detailed below. These include traditional distance measures such as Manhattan or Euclidean distances or computing correlations between rows of the adjacency matrix or random walk based similarities \cite{Pons05}. 

Once the definition of cluster structure is fixed, it is crucial to build efficient procedures and algorithms for the identification of such structures in the network. The ability to find and  to analyze such groups can provide an invaluable help in understanding and visualizing the  functional  components  of  the  whole  graph \cite{Girvan02,Newman04}. Classical techniques for data clustering, like hierarchical clustering, partitioning clustering and spectral clustering, detailed below,  are sometimes adopted for graph clustering too. Hierarchical clustering \cite{Hastie01} builds a hierarchy of nested clusters organized as a tree. partitioning clustering \cite{Pothen97} decomposes the graph  into a set of disjoint clusters. Given $N$ variables/nodes, it builds $k$ partitions of the data by satisfying:  (i) each group contains at least one point (ii) each point belongs to exactly one group.  In recent years, spectral clustering  has become one of the most widely used methods due to its speed and simplicity \cite{Luxburg07,Chung97,Ng02,Fortunato10}. This method extracts the geometry and local information of the dataset by computing  the top or bottom  eigenvectors of specially constructed matrices. The observations are  projected into this eigenspace to reduce the dimensionality of the problem and   $k$-means procedure is then applied in an easier subspace to detect clusters.


$k$-means, that belongs to  partitional clustering methods, aims to find a set of $k$ cluster centers of a dataset such that the sum squared of distances of each point to its closest cluster center is minimized. Lloyd’s 1957 procedure  \cite{Lloyd82} remains one of the widely used because of its speed and simplicity. It has been studied for several decades \cite{Lloyd82,Wu08} and many versions of this technique has recently been developed.  \cite{Xu19} proposed alternatives to Lloyd’s algorithm that preserves its simplicity, makes it more robust to initialization and relieves its tendency to get trapped by local minima. \cite{Lattanzi19} developed a new variant of $k$-means++ seeding algorithm \cite{Arthur07}  to achieve a constant approximation guarantee. 

\textbf{Our contribution.} Observed real networks differ from random graphs from their edge distribution and from their underlying structures. Erdös Renyi random graphs models \cite{Erdos59}, where all the pairs of nodes have equal probability of being connected by an edge, independently of all other pairs fail to model real observed graphs. Additionnally, stochastic block models are not always relevant to infer their structures. To remedy this problem, we developed a new random model, closely related to stochastic block model, but better suited to model graphs that have been  inferred from the observations. In practice, graphs that are studied are not known beforehand but often estimated.To achieve a good clustering recovery, random graph models are often associated to their similarity matrix to maintain the clustering structure of the graph. \cite{Wang16} developed a model to learn a doubly stochastic matrix which encodes the probability of each pair of data points to be connected,  used to normalize the affinity matrix such that the data graph is more suitable for clustering tasks.  \cite{Peng15} has shown that for a wide class of graphs, spectral clustering gives a good approximation of the optimal cluster.
 In our model, we assume that a group  does not emerge by chance but because there exists an underlying structure. This randomized version of the deterministic graph with exact cluster structure, is used to check whether it displays the original cluster structure. \cite{Sussman12,Lei15,Rohe11} proved consistency of spectral clustering applied to stochastic block models for some specific adjacency type matrices. Even if the consistency of spectral clustering has been proved for stochastic block models, there is no convergence guarantee for general models. Thus,  $k$-means can fail to reach the true underlying partitions of the graph. Moreover, spectral clustering technique fails to recover the original clusters when it comes to a higher randomization coefficient. This is mostly due to the computed eigenbasis that is not equally informative. 

In order to tackle this issue, we develop an alternative method to the spectral clustering that promote a sparse eigenvectors basis solution of an $\ell_0$ optimization problem,  corresponding to the indicator vectors of each cluster. Since the natural constrained $\ell_0$ is a NP-hard problem,  it was then replaced by its convex relaxation $\ell_1$ \cite{Ramirez13}. Actually we can show that the solution of the $\ell_0$ optimization problem is still the same when replacing the $\ell_0$-norm by the $\ell_1$-norm if we add a constraint on the maximum of the coefficients. Hence,  the algorithm turns out to solve an  $\ell_1$-penalty   optimization problem that is feasible and easy to implement, even for very large graphs. In a wider scope,  research papers have explored differently regularized  spectral clustering to robustly identify clusters in large networks.
Although \cite{Zhang18} and \cite{Joseph16} show the effect of regularization on  spectral clustering through graph conductance and respectively through stochastic block models. Equally, \cite{Lara19}, shows on a simple block model that the spectral regularization  separates the underlying blocks of the graph.


In this paper, we introduce  in Section \ref{section2} a new random graph model, used to solve spectral clustering (Section \ref{section3}) and its new variant  (Section \ref{section4}) objective function. We prove the  efficiency and accuracy of the variant algorithm  in Section \ref{section5} through experiments on simulated and  real  medical dataset (Section \ref{section6}).

\section{New random graph model}\label{section2}

\subsection{Notations}


This work considers the framework of an unweighted undirected graph  $G(V,E)$ with no self-loops consisting of  vertices $V=\left\{ 1,\dots,n \right\}$ and  $p$ edges connecting each pair of vertices.  An edge $e \in E$ that connects a node $i$ and a node $j$ is denoted by $e = (i,j)$. In this paper, we consider  that the existence of a link between two nodes in an interaction network is already inferred from the estimation of a statistical dependance measure. The graph $G$  is represented hereusing an adjacency matrix  $A = (A_{ij})_{(i,j) \in V^2}$ defined by 

\begin{center}
$A_{i,j} = \left\{
\begin{array}{ll}
1  \  \ \mbox{if there is an edge between } i \mbox{ and } j,\\
0 \  \  \mbox{otherwise.}
\end{array}
\right.$
\end{center}

Since the graph is undirected and with no self-loops, $A \in  \mathbb{M}_n(\mathbb{R})$ is a symmetric matrix with coefficients zero on the diagonal.

For each node $i$, the degree $d_i$ is defined as the number of edges incident to $i$ and is equal to : $d_i =\sum\limits_{j=1}^n A_{ij}$. We denote by $D$  the diagonal degree matrix  containing $(d_1,\dots ,d_n)$ on the diagonal and zero elsewhere. 

A subset $C \in V$ of a graph is said to be connected if any two vertices in $C$ are connected by a path in $C$. Non empty sets $ C_1, \dots, C_k $ form a partition of the graph   $G(V,E)$ if $C_i \cap C_j =\emptyset$ and $C_1\cup  \cdots \cup C_k=V$.   In addition, $C_i$ are called  connected components if there are no connections between vertices in $C_i$ and $\overline{C_i}$ for all i  in $ \left\{1, \dots, k \right\}$. 

We define the indicators of connected components $\textbf{1}_{C_i}$ whose entries are defined by:

\begin{center}
$(\textbf{1}_{C_i})_j = \left\{
\begin{array}{ll}
1  \  \ \mbox{if vertex } j \mbox{ belong to } C_i,\\
0 \  \  \mbox{otherwise.}
\end{array}
\right.$ 
\end{center}

\subsection{Graph models}

As mentioned in Section \ref{section1}, random graph models, in general, are not always relevant to represent the structure of a graph that has been  inferred from observations. To tackle this issue, we create a new random model with an underlying structure that is a randomized version of a deterministic graph with exact cluster structure.

\subsubsection{Ideal model}

We consider that the  graph $G_*(V,E)$ is the union of $k$ complete graphs  that are disconnected from each other.  We denote by $C_1, \dots, C_k$ the $k$ connected components of the graph, that match the $k$ clusters. We allow the number of vertices in each subgraph to be different. We denote by $c_1,\cdots,c_k $ $(\geq 2)$ their respective size ($\sum_{i=1}^k c_i=n$). To simplify, we assume that the nodes, labeled from $1$ to $n$, are ordered with respect to their block membership and in increasing order with respect to the size of the blocks. 

From a matricial point of view, the associated adjacency matrix $A_*$ is a $k$-block diagonal matrix of size $n$ of the form:

\[A_*=
\left[ 
\begin{array}{c@{}c@{}c}
  C_1  &  & \mathbf{0} \\
   &  \ddots & \\
  \mathbf{0} &  & C_k \\
\end{array}\right]
\]

where $C_1, \cdots,  C_k$ are symmetric matrices of size $c_1 \times c_1, \cdots, c_k \times c_k$.

\subsubsection{Perturbed model}

The reality is  that we consider the graph $G_*$ but we observe a randomized  version of this graph,  denoted by $\tilde{G}$.

We introduce the Erdös–Rényi model of a graph \cite{Erdos59,Stewart90},  one of the oldest and best studied random graph model. 

Given a set of $n$ vertices, we consider the variable $X_{ij}$ that  indicates the presence/absence of an edge between vertices $i$ and $j$. Then, for all $  \left\{ X_{ij} \right\} i.i.d.$, we have $ X_{ij} \sim B(p)$. 
Some edges have been added between the clusters and others have been removed within the clusters independently with respect to the same  probability $p$. The  adjacency matrix $B$ of the Erdos-Renyi graph of size $n$, whose upper entries are realizations of independent Bernoulli variables, can be written as
\begin{center}
\[
\left\{
\begin{array}{r c l}
B_{ij} &\sim& \mathcal{B}(p)  \  \ i.i.d, \ i<j \\
B_{ii}&=&0 \\
 B_{ij}&=&B{ji}
\end{array}
\right.
\]
\end{center}

The graph $\tilde{G}$ of the new model,  is derived from a deterministic graph with an exact cluster structure, whose edges have been disturbed Erdös Renyi random graph. For instance,  this perturbation may arise because of  a  partial  knowledge  of  the  graph.   

Let  $\tilde{A}$ be the adjacency matrix associated to $\tilde{G}$. $\tilde{A}$ is defined as follows:

 \begin{center}
$\tilde{A}$ $ =A_* \bigoplus\limits^2 B$
\end{center} where $\tilde{A}_{ij} =  A_{ij*} +  B_{ij} \  [2]$, namely, 

$\tilde{A}_{ij}= \left\{
\begin{array}{ll}
0  \  \ \mbox{ if } B_{ij}=1 \mbox{ and } A_{ij}=1 \\
 \mbox{ or } B_{ij}=0 \mbox{ and } A_{ij}=0,\\
1 \  \  \mbox{ if } B_{ij}=1 \mbox{ and } A_{ij}=0 \\
\mbox{ or }  B_{ij}=0 \mbox{ and } A_{ij}=1.
\end{array}
\right. $

\section{Graph clustering through spectral clustering
} \label{section3}

\subsection{Spectral clustering  algorithm}

Looking first at the ideal graph model, let $A_*$ and $D_*$ the adjacency and degree matrices associated to the graph $G_*$.

Spectral clustering algorithm is based on graph Laplacian matrices. Among them,  three different variants are used: \begin{itemize}
\item the Unormalized Laplacian:

\begin{center}
 $L_*=D_*-A_*$,
 \end{center}
\item the  Symmetric Laplacian: 

\begin{center}
$L_{sym*}=D_*^{-\frac{1}{2}}A_*D_*^{-\frac{1}{2}}$,
\end{center}

\item  the  Random Walk Laplacian:

\begin{center}
$L_{rw*}=I-D_*^{-1} A_*$. 
\end{center}
\end{itemize}
The original spectral clustering method has been proposed by  \cite{Luxburg07}  to cluster the nodes of the graph into $k$ connected components. The idea behind spectral clustering is to use the first $k$ eigenvectors (corresponding the $k$ smallest eigenvalues)  a normalized or unormalized version of the Laplacian matrix (derived from the adjacency one) to recover the connected components of the graph.  If these matrices are so appealing in graph clustering, it is because of the following proposition:

\begin{prop}\label{prop1}(Number of connected components):
The multiplicity $k$ of the eigenvalue $0$ of  $L_*$ and $L_{rw*}$ and the multiplicity of the generalized eigenvalue $0$ of $L_{sym*}$ are  equal the number of connected components $C_1 , \dots, C_k $ in the graph. For $L_*$ and $L_{rw*}$, the eigenspace associated to $0$ is spanned by the indicators of connected components $\left\{ 1_{C_i} \right\}_{1 \leq i \leq n}$. For $L_{sym*}$, the eigenspace associated to $0$ is spanned by $\left\{ D_*^{1/2} 1_{C_i} \right\}_{1 \leq i \leq n}$.
\end{prop}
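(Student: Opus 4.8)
The plan is to establish the statement first for the unnormalized Laplacian $L_*$ and then to transfer it to $L_{rw*}$ and $L_{sym*}$ via the elementary algebraic relations between their kernels.

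\emph{Step 1 (the unnormalized case).} I would start from the standard quadratic-form identity: for every $f\in\mathbb{R}^n$,
\[
f^\top L_* f \;=\; \frac12\sum_{i,j=1}^n (A_*)_{ij}\,(f_i-f_j)^2 .
\]
In particular $L_*$ is symmetric positive semidefinite, so $L_* f = 0$ is equivalent to $f^\top L_* f = 0$. By the identity this happens exactly when $f_i = f_j$ for every edge $(i,j)$, i.e. when $f$ is constant on each connected component $C_1,\dots,C_k$. Hence $\ker L_* = \mathrm{span}\{\mathbf{1}_{C_1},\dots,\mathbf{1}_{C_k}\}$; since these indicators have pairwise disjoint supports they are linearly independent, so $\dim\ker L_* = k$ and the eigenvalue $0$ has multiplicity $k$. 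Equivalently, under the vertex ordering fixed in Section~\ref{section2}, $L_*$ is block diagonal with one block per component, each block being the Laplacian of a connected graph whose kernel is one-dimensional and spanned by its all-ones vector.

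\emph{Step 2 (the two normalized versions).} Because in the ideal model every component is a complete graph on $c_i\ge 2$ vertices, every degree is at least $1$ and $D_*$ is invertible. Then $L_{rw*} f = 0 \iff D_*^{-1}A_* f = f \iff (D_*-A_*)f = 0 \iff L_* f = 0$, so $\ker L_{rw*} = \ker L_*$: same multiplicity $k$ and same eigenspace $\mathrm{span}\{\mathbf{1}_{C_i}\}$. For the symmetric normalization I would use the change of variable $f = D_*^{-1/2}w$: the generalized eigenvalue equation at $0$, namely $L_* f = \lambda D_* f$ with $\lambda = 0$, is equivalent to $L_{rw*} f = 0$ and hence to $(I - D_*^{-1/2}A_*D_*^{-1/2})w = 0$ with $w = D_*^{1/2}f$. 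Thus the relevant eigenspace is the image of $\mathrm{span}\{\mathbf{1}_{C_i}\}$ under $D_*^{1/2}$, i.e. $\mathrm{span}\{D_*^{1/2}\mathbf{1}_{C_i}\}$, again of dimension $k$.

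I do not anticipate a real obstacle, since this is a classical fact; the only points needing care are (i) arguing that $\ker L_*$ equals — and not merely contains — the span of the component indicators, which the ``$f$ constant on each component'' argument handles, and (ii) checking that $D_*$ is invertible so that $L_{rw*}$ and the similarity transform by $D_*^{1/2}$ are well defined, which holds thanks to the ideal-model assumption $c_i\ge 2$.
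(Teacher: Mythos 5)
Your proof is correct and is the standard argument (essentially the one in von Luxburg's tutorial, which is the paper's cited source); the paper itself states Proposition~\ref{prop1} without proof, so your write-up simply supplies the classical justification, and both key points you flag — that $\ker L_*$ \emph{equals} the span of the indicators, and that $D_*$ is invertible in the ideal model — are handled properly. One small caveat: the paper's displayed definition reads $L_{sym*}=D_*^{-1/2}A_*D_*^{-1/2}$, without the identity term; your Step~2 implicitly uses the usual $I-D_*^{-1/2}A_*D_*^{-1/2}$, which is what makes the eigenvalue $0$ (rather than $1$) correspond to the components, so you are proving the statement for the standard (and surely intended) definition rather than the one literally printed.
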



We deduce from \textbf{Proposition 1} that  a particular basis of the associated eigenspace is  spanned by the connected components indicators. In addition,  the rows of the matrix resulting from the concatenation of the $k$ first  eigenvectors, are equal for indices corresponding to nodes in the same component.  Therefore, it is natural to  apply $k$-means to these rows to provide, by the same way the blocks. Moreover, as the graph is made of exactly $k$ connected components, the computation of the eigenvectors of $L_*$, $L_{sym*}$, $L_{rw*}$ enables to recover these components. 

\subsection{Limits}

Secondly, we consider  the perturbed version $\tilde{G}$ of the  graph $G_*$. Thus,  $\tilde{G}$  is no longer made of connected components, but of densely connected subgraphs that are sparsely connected to each other. These densely connected subgraphs represent somehow a perturbed version of the initial connected components that form the clusters. As our model is closely related to stochatic block models and if the perturbation is not too high, we can still hope that the rows of the $k$ concatenated eigenvectors  are still closed for indices corresponding to nodes in the same cluster.  But there is no theoritical guarantee that  it still contains enough information on the graph structure to detect these clusters using $k$-means procedure.


To overcome this issue, we developed an alternative to the standard spectral clustering, called $\ell_1$-spectral clustering, that aims at finding the $k$ underlying connected components of a graph $G_*$ with an exact cluster structure from its perturbed version $\tilde{G}$.

\section{$\ell_1$-spectral clustering, a new method for  connected component detection
} \label{section4}

To ensure a good recovery of the connected components,  the way  the eigenvectors basis is built    is of the highest importance.  The key is to replace the $k$-means procedure by the selection of  relevant eigenvectors that provide useful information about the structure of the data. 
$\ell_1$-spectral clustering focused on the graph adjacency matrix instead of the Laplacian matrix or its normalized version, and its good properties. The idea remains the same if we replace the adjacency matrix by the Laplacian or its normalized version as proved by \cite{Sussman12,Rohe11}.

We consider  the ideal  adjacency matrix $A_*$ associated to the graph $G_*(V,E)$, we assume in what follows that the eigenvalues  of $A_*$  are sorted  increasing order. And the same goes for the associated eigenvectors.

The  indicators ${\{\textbf{1}_{C_i}\}}_{n-k+1 \leq i\leq n}$ of the  connected components $C_{n-k+1}, \dots, C_{n}$  are the eigenvectors associated this time to the largest eigenvalues.  Theses eigenvalues   are equal to the degree coefficients of the connected components $d_{n-k+1}, \dots, d_n$. The $k$ first eigenvectors of $A_*$ (associated to the $k$ largest eigenvalues) are thus denoted  $v_{n-k+1}, \dots, v_n$. Let $V_{1,k}$ the matrix that contains $v_{n-k+1}, \cdots, v_n$ in columns and by $V_2,n$ the one that contains $v_{1}, \dots, v_{n-k}$. We define $\mathcal{V}_{1,k}^0 =\text{Span}\{v_{n-k+1},\dots,v_n\}$.

Unlike the traditional spectral clustering method,  $\ell_1$-spectral clustering does not directly use  the subspace spanned by the  eigenvectors associated to the largest eigenvalues to recover the connected components but computes another eigenbasis that promotes sparse solutions for the eigenvectors.

\subsection{General $\ell_0$ minimization problem}

Proposition 2 and 3 below show that the connected components indicators are solution of some specific problem.

 
 \begin{prop}\label{prop2}
The minimization problem ($\mathcal{P}_0$)
\begin{center}
$\underset{v\in \mathcal{V}_{1,k}^0 \backslash\{0\}}{\arg\min}  {\|v\|}_0 $
\end{center}
has a unique solution (up to a constant) given by $\textbf{1}_{C_{n-k+1}}$.
\end{prop}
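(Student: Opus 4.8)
The plan is to reduce $(\mathcal{P}_0)$ to an elementary combinatorial minimization, after making the feasible set $\mathcal{V}_{1,k}^0$ fully explicit. First I would describe $\mathcal{V}_{1,k}^0$: since $A_*$ is block diagonal and each block is the adjacency matrix of the complete graph on $c_i$ vertices, whose eigenvalues are $c_i-1$ (simple, with eigenvector $\textbf{1}_{C_i}$) together with $-1$ of multiplicity $c_i-1$, and since $c_i\ge 2$ guarantees $c_i-1>-1$, the $k$ largest eigenvalues of $A_*$ are exactly the numbers $c_i-1$ and the associated eigenspace is $\mathrm{Span}\{\textbf{1}_{C_{n-k+1}},\dots,\textbf{1}_{C_n}\}$. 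This is the adjacency-matrix counterpart of Proposition~\ref{prop1}. Because $C_{n-k+1},\dots,C_n$ form a partition of $V$, these indicator vectors have pairwise disjoint supports, so they are linearly independent and form a basis of $\mathcal{V}_{1,k}^0$.

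Next I would expand an arbitrary feasible vector in this basis: every $v\in\mathcal{V}_{1,k}^0\setminus\{0\}$ writes uniquely as $v=\sum_{i=n-k+1}^{n}\alpha_i\textbf{1}_{C_i}$ with at least one $\alpha_i\ne 0$. Disjointness of the supports gives $\mathrm{supp}(v)=\bigcup_{i:\,\alpha_i\ne 0}C_i$, hence $\|v\|_0=\sum_{i:\,\alpha_i\ne 0}c_i$. Writing $S=\{i:\alpha_i\ne 0\}\ne\emptyset$, we obtain $\|v\|_0=\sum_{i\in S}c_i\ge \min_{n-k+1\le i\le n}c_i=c_{n-k+1}$, where the last equality uses the convention that the components are labelled by nondecreasing size. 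Equality forces $S$ to be a singleton consisting of an index of a smallest component, and then $v$ is a nonzero multiple of the corresponding indicator.

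Putting these together, the minimum of $\|v\|_0$ over $\mathcal{V}_{1,k}^0\setminus\{0\}$ equals $c_{n-k+1}$ and is attained exactly on the punctured line $\{\alpha\,\textbf{1}_{C_{n-k+1}}:\alpha\ne 0\}$, which is the claimed solution up to a constant. The one point that needs care — and the only real obstacle I anticipate — is the uniqueness statement: if several connected components share the minimal size $c_{n-k+1}$, then each of their indicators is an equally sparse feasible vector and no two of them are proportional, so uniqueness can only hold under the (implicit) assumption that the smallest component is strictly smaller than all the others; under that assumption the singleton $S$ above is necessarily $\{n-k+1\}$ and the argument closes. The remaining verifications — the spectral description of $\mathcal{V}_{1,k}^0$ and the additivity of $\|\cdot\|_0$ over disjoint supports — are routine.
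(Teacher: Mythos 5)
Your proof is correct and follows essentially the same route as the paper's: decompose $v$ in the basis of component indicators, use disjointness of supports to write $\|v\|_0$ as the sum of the sizes $c_i$ over the indices with nonzero coefficient, and minimize over nonempty index sets. Your two additional observations --- the explicit spectral justification that $\mathcal{V}_{1,k}^0=\mathrm{Span}\{\textbf{1}_{C_{n-k+1}},\dots,\textbf{1}_{C_n}\}$ for the adjacency matrix (the paper only invokes Proposition~\ref{prop1}, which is stated for Laplacians), and the caveat that uniqueness up to a constant fails if several components tie for the minimal size --- are both correct and make explicit points the paper leaves unaddressed.
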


In other words, $\textbf{1}_{C_{n-k+1}}$ is the sparsest non-zero eigenvector in the space spanned by the   eigenvectors associated to the $k$ largest eigenvectors.

\begin{proof}
We recall that $\|v\|_0=| \left\{ j : v_j \ne 0  \right\}|$. Let $v \in \mathcal{V}_{1,k}^0 \backslash \left\{ 0 \right\}$. Therefore, as $\textbf{1}_{C_{n-k+1}} \in \mathcal{V}_{1,k}^0$, $v$ can be decomposed as $v=\sum\limits_{j=n-k+1}^n \alpha_j \textbf{1}_{C_j}$ where $\alpha=(\alpha_{n-k+1},\dots ,\alpha_n)\in \mathbb{R}^k \  $ and $ \exists   j,  \  \alpha_j \ne 0$.

The connected components of sizes $c_{n-k+1},\dots, c_n$ are sorted in increasing order of size.
Therefore, by Proposition \ref{prop1}, $\|v\|_0=\textbf{1}_{\alpha_{n-k+1} \ne 0}c_{n-k+1}+\dots +\textbf{1}_{\alpha_n \ne 0} c_n$. The solution of ($\mathcal{P}_0$) is given by the vector in $\mathcal{V}_{1,k}\backslash\{0\}$ with the smallest $\ell_0$-norm  such that $\alpha=(\alpha_{n-k+1} , 0, \dots, 0)$ where $\alpha_{n-k+1} \ne 0$.
\end{proof}

We can generalize Proposition \ref{prop2} to find, iteratively and with sparsity constraint,  the other following indicators of  connected components. 

 For $i=n-k+2,\dots,n$, let $\mathcal{V}_{1,k}^i =\left\{v\in \mathcal{V}_{1,k} : v \perp \textbf{1}_{C_l}, l=n-k+1, \dots ,i-1 \right\}$.

\begin{prop}\label{prop3}
The minimization problem ($\mathcal{P}_i$)
\begin{center}
$\underset{v\in \mathcal{V}_{1,k}^i \backslash\{0\}}{\arg\min}  {\|v\|}_0 $
\end{center}
has a unique solution (up to a constant) given by $\textbf{1}_{C_i}$.
\end{prop}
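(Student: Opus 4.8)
The plan is to reduce $(\mathcal{P}_i)$ to exactly the combinatorial argument already used in the proof of Proposition \ref{prop2}; the only genuinely new step is a correct description of the feasible set $\mathcal{V}_{1,k}^i$. First I would record two facts coming from Proposition \ref{prop1}: that $\mathcal{V}_{1,k}^0=\mathrm{Span}\{\textbf{1}_{C_{n-k+1}},\dots,\textbf{1}_{C_n}\}$, and that the indicators $\textbf{1}_{C_{n-k+1}},\dots,\textbf{1}_{C_n}$ are pairwise orthogonal because the $C_j$ are disjoint, with $\langle \textbf{1}_{C_j},\textbf{1}_{C_l}\rangle = c_j\,\delta_{jl}$. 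In particular they form an orthogonal basis of $\mathcal{V}_{1,k}^0$.

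Next I would pin down $\mathcal{V}_{1,k}^i$. Writing a generic $v\in\mathcal{V}_{1,k}^0$ uniquely as $v=\sum_{j=n-k+1}^{n}\alpha_j\,\textbf{1}_{C_j}$, the defining constraints $v\perp\textbf{1}_{C_l}$ for $l=n-k+1,\dots,i-1$ become $\alpha_l\,c_l=0$, hence $\alpha_l=0$ for those indices since $c_l\geq 2>0$. Therefore
\begin{center}
$\mathcal{V}_{1,k}^i=\mathrm{Span}\{\textbf{1}_{C_i},\textbf{1}_{C_{i+1}},\dots,\textbf{1}_{C_n}\}$,
\end{center}
so a generic nonzero element is $v=\sum_{j=i}^{n}\alpha_j\,\textbf{1}_{C_j}$ with $\alpha_j\neq 0$ for at least one index $j$. (One can equivalently phrase this as an induction on $i$: from the inductive description of $\mathcal{V}_{1,k}^{i-1}$ as $\mathrm{Span}\{\textbf{1}_{C_{i-1}},\dots,\textbf{1}_{C_n}\}$, imposing $v\perp\textbf{1}_{C_{i-1}}$ removes $\textbf{1}_{C_{i-1}}$, which advances the induction.)

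With $\mathcal{V}_{1,k}^i$ identified, the rest copies the proof of Proposition \ref{prop2}. By disjointness of supports, $\|v\|_0=\sum_{j=i}^{n}\textbf{1}_{\alpha_j\neq 0}\,c_j$. Because the block sizes were ordered increasingly in Section \ref{section2}, we have $c_i\leq c_{i+1}\leq\dots\leq c_n$, so over nonzero $v$ this sum is minimized precisely when exactly one coefficient is nonzero and that coefficient is $\alpha_i$; the minimal value is $c_i$ and the minimizer is $\alpha_i\,\textbf{1}_{C_i}$, i.e.\ $\textbf{1}_{C_i}$ up to a multiplicative constant.

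I do not expect a serious obstacle here: once $\mathcal{V}_{1,k}^i$ is shown to be the span of the remaining indicators, the argument of Proposition \ref{prop2} applies essentially verbatim. The one point that deserves a line of care is \emph{uniqueness} of the minimizer, which uses that the ordering of the sizes is strict enough that no $c_j$ with $j>i$ equals $c_i$ — exactly the hypothesis under which the blocks were labelled — since otherwise $\textbf{1}_{C_j}$ would be an equally sparse competitor.
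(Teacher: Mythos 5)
Your proof is correct and follows exactly the route the paper intends: the paper gives no explicit argument for Proposition \ref{prop3}, saying only that it ``generalizes'' Proposition \ref{prop2}, and your identification of $\mathcal{V}_{1,k}^i$ as the span of the remaining indicators followed by the same disjoint-support counting argument is precisely that generalization. Your closing remark that uniqueness (up to a constant) requires $c_i$ to be strictly smaller than the later block sizes is a valid caveat that applies equally to the paper's own Proposition \ref{prop2}, and is handled appropriately.
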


Solving ($\mathcal{P}_0$) (Proposition \ref{prop2})  is  a  NP-hard problem.  In order to tackle this issue, we replace the ${\ell_0}$-norm  by its convex relaxation  ${\ell_1}$-norm. We can show that the solution of the $\ell_0$ optimization problem is still the same by replacing the $\ell_0$-norm by the $\ell_1$-norm, if we add the constraint on the maximum of the coefficients.

\subsection{General $\ell_1$ minimization problem to promote sparsity}\label{l1}

 In addition to the number of connected components, we assume that we know one representative of each component i.e. a node belonging to this component. This assumption is not so restrictive compared to traditional spectral clustering where the number of clusters is assumed to be known.  If we do not exactly know a representative for each component,  we can estimate them by first applying a rough partitioning algorithm or just an algorithm that aims to find  hubs of very densely connected parts of the graph.

Let $I_{n-k+1},\dots, I_n$ be the  row indices of the representative element of each component and let $\tilde{\mathcal{V}}_{1,k}^1 =\{v\in \mathcal{V}_{1,k}^0 : v_{I_j}=1\}$ for all  $j\in \left\{n-k+1, \dots,n   \right\}$. This is straightfoward to see that the  indicator vector of the smallest component is solution of the following optimization problem.

\begin{prop}\label{prop4}
The minimization problem  ($\mathcal{P}_1$)
\begin{center}
$\underset{v\in \tilde{\mathcal{V}}_{1,k}^1 }{\arg\min}  {\|v\|}_1 $
\end{center}
has a unique solution  given by $\textbf{1}_{C_{n-k+1}}$.
\end{prop}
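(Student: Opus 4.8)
The plan is to use Proposition~\ref{prop1} to reduce $(\mathcal{P}_1)$ to an explicit finite-dimensional convex minimization and to solve it by hand. First I would record the adjacency-matrix form of Proposition~\ref{prop1} in the ideal model: each block $C_j$ is a complete graph on $c_j\ge 2$ vertices, whose top eigenvalue is $c_j-1>-1$ with eigenvector $\textbf{1}_{C_j}$; hence the $k$ largest eigenvalues of $A_*$ are $\{c_j-1\}_{j\ge n-k+1}$ and the associated eigenspace is $\mathcal{V}_{1,k}^0=\mathrm{Span}\{\textbf{1}_{C_{n-k+1}},\dots,\textbf{1}_{C_n}\}$. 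Consequently every $v\in\tilde{\mathcal V}_{1,k}^1\subset\mathcal V_{1,k}^0$ can be written uniquely as $v=\sum_{j=n-k+1}^{n}\alpha_j\textbf{1}_{C_j}$ with $\alpha\in\mathbb R^k$, and the feasible set is nonempty since $\textbf{1}_{C_{n-k+1}}$ belongs to it.

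Next I would translate the defining constraint into a constraint on $\alpha$. Because the representative index $I_{n-k+1}$ lies in $C_{n-k+1}$ and the $C_j$ are pairwise disjoint, the $I_{n-k+1}$-th coordinate of $v$ equals $\alpha_{n-k+1}$; hence membership in $\tilde{\mathcal V}_{1,k}^1$ forces $\alpha_{n-k+1}=1$, the remaining coefficients staying free. Since the $\textbf{1}_{C_j}$ have disjoint supports of cardinality $c_j$, one has $\|v\|_1=\sum_{j=n-k+1}^{n} c_j\,|\alpha_j|$, so after substituting $\alpha_{n-k+1}=1$, $\|v\|_1=c_{n-k+1}+\sum_{j=n-k+2}^{n} c_j\,|\alpha_j|\ \ge\ c_{n-k+1}$. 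As every $c_j\ge 2>0$, the objective is strictly increasing in each free $|\alpha_j|$, so the lower bound is attained precisely when $\alpha_j=0$ for all $j>n-k+1$; the minimizer is therefore unique and equals $v=\textbf{1}_{C_{n-k+1}}$. Existence of a minimizer is immediate, the feasible set being a nonempty closed affine subspace on which $\|\cdot\|_1$ is coercive.

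The argument is deliberately elementary and I do not expect a genuine obstacle; the only two points deserving care are (i) the reduction $\mathcal V_{1,k}^0=\mathrm{Span}\{\textbf{1}_{C_j}\}_{j\ge n-k+1}$, which is exactly where the ``ideal'' block-complete structure and the hypothesis $c_j\ge 2$ enter (they guarantee $c_j-1$ strictly exceeds the eigenvalue $-1$, so the $\textbf{1}_{C_j}$ really are the top-$k$ eigenvectors), and (ii) the strictness of $\|v\|_1\ge c_{n-k+1}$, which uses only positivity of the block sizes. Note that, in contrast with Proposition~\ref{prop2}, the ordering of the blocks by size is irrelevant here: the normalization $v_{I_{n-k+1}}=1$ already singles out the component $C_{n-k+1}$, so passing from the $\ell_0$ problem to this normalized $\ell_1$ problem is what removes the need for any tie-breaking.
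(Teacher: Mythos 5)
Your proof is correct and follows essentially the same route as the paper's: decompose $v=\sum_j\alpha_j\mathbf{1}_{C_j}$ using Proposition~\ref{prop1}, use the normalization at the representative index to force $\alpha_{n-k+1}=1$, and minimize $\sum_j c_j|\alpha_j|$ term by term. Your write-up is in fact slightly more careful than the paper's, which omits the absolute values in writing $\|v\|_1=\alpha_{n-k+1}c_{n-k+1}+\dots+\alpha_n c_n$ and still invokes the (here irrelevant) ordering of the blocks by size — two points you correctly handle and explicitly flag.
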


\begin{proof}\label{proof2}
We recall that $\|v\|_1=\sum\limits_{i=1}^n |v_i|$. Let $v \in \tilde{\mathcal{V}}_k^1$. Therefore, as $\textbf{1}_{C_{n-k+1}} \in \mathcal{V}_{1,k}^0$, $v$ can be decomposed as $v=\sum\limits_{j=n-k+1}^n \alpha_j \textbf{1}_{C_j}$ where $\alpha=(\alpha_{n-k+1},\dots ,\alpha_n)\in \mathbb{R}^k \  $ and there exists  $ j \in \left\{n-k+1, \dots, n \right\} ,  \  \alpha_j \ne 0$.

The connected components of sizes $c_{n-k+1},\dots, c_n$ are sorted in increasing order of size.
Therefore, $\|v\|_1=\alpha_{n-k+1} c_{n-k+1}+\dots +\alpha_n c_n$. The solution of ($\mathcal{P}_1$) is given by the vector in $\tilde{\mathcal{V}}_{1,k}^1$ that satisfies $\|v\|_{+\infty}$ and  with the smallest $\ell_1$-norm  such that $\alpha=(\alpha_{n-k+1} , 0, \dots, 0)$ where $\alpha_{n-k+1}=1$.

\end{proof}

To simplify and without loss of generality, we assume that $I_{n-k+1}$ corresponds to the first index (up to a permutation).  We can rewrite ($\mathcal{P}_1$) (Proposition \ref{prop4}) as:

\begin{center}
$\underset{(1,v) \in \mathcal{V}_{1,k}}{\underset{v \in \mathbb{R}^{n-1}}{\arg\min}} {\|v\|}_1 $
\end{center}

Constraints in ($\mathcal{P}_1$) can be moved to the following  equality contraints:

\begin{prop}\label{prop5}
Let $w$ be the first column of $V_{2,n}^T$. We define $W$ as the matrix $V_{2,n}^T$ whose first column $w$ has been deleted.

The  minimization problem ($\tilde{\mathcal{P}}_1$) 
 \begin{center}
$\underset{W v=-w}{\underset{v \in \mathbb{R}^{n-1}}{\arg\min}} {\|v\|}_1 $
\end{center} 
with $w=V_{2,n}^{(1)}$ and $W=V_{2,n}^{(n-1)}$ has a unique  solution $\tilde{v}$ equals to $\textbf{1}_{C_{n-k+1}}$ such that  $\tilde{v}=(1,v)$.
\end{prop}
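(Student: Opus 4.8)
The plan is to show that the constraint $Wv = -w$ is just a reformulation of the membership condition $(1,v) \in \mathcal{V}_{1,k}$, so that Proposition \ref{prop4} applies directly. First I would recall that $\mathcal{V}_{1,k} = \mathcal{V}_{1,k}^0 = \text{Span}\{v_{n-k+1},\dots,v_n\}$ is the span of the top $k$ eigenvectors of $A_*$, and that its orthogonal complement is exactly $\text{Span}\{v_1,\dots,v_{n-k}\}$, i.e. the column space of $V_{2,n}$. Hence a vector $u \in \mathbb{R}^n$ lies in $\mathcal{V}_{1,k}$ if and only if $u \perp v_j$ for all $j \le n-k$, which is the linear system $V_{2,n}^T u = 0$.

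Next I would write $u = (1,v)$ with $v \in \mathbb{R}^{n-1}$ and split $V_{2,n}^T$ column-wise into its first column $w = V_{2,n}^{(1)}$ and the remaining $n-1$ columns $W = V_{2,n}^{(n-1)}$. Then $V_{2,n}^T u = w\cdot 1 + W v = w + Wv$, so the condition $u \in \mathcal{V}_{1,k}$ becomes precisely $Wv = -w$. Therefore the feasible set of $(\tilde{\mathcal{P}}_1)$, namely $\{v \in \mathbb{R}^{n-1} : Wv = -w\}$, is in bijection (via $v \mapsto (1,v)$) with the feasible set $\tilde{\mathcal{V}}_{1,k}^1$ of $(\mathcal{P}_1)$ under the normalization that $I_{n-k+1}$ is the first index. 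Since the two problems have the same objective $\|\cdot\|_1$ on corresponding vectors (the first coordinate contributes the constant $1$ to both, which does not affect the argmin), they have the same minimizer. By Proposition \ref{prop4} this minimizer is $\textbf{1}_{C_{n-k+1}}$, and writing it as $(1,\tilde v)$ gives the claimed $\tilde v$ with $\|\tilde v\|_1 = \|\textbf{1}_{C_{n-k+1}}\|_1 - 1$.

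For uniqueness I would simply transport the uniqueness already established in Proposition \ref{prop4} through the affine bijection $v \mapsto (1,v)$: distinct feasible $v$ for $(\tilde{\mathcal{P}}_1)$ correspond to distinct feasible vectors for $(\mathcal{P}_1)$, and the latter problem has a unique solution, so does the former. I would also note for completeness that $\textbf{1}_{C_{n-k+1}}$ indeed has first coordinate equal to $1$ under the labeling convention (the representative index $I_{n-k+1}$, taken to be index $1$, belongs to the smallest component $C_{n-k+1}$), so the solution is genuinely of the form $(1,\tilde v)$.

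The main obstacle is bookkeeping rather than mathematics: one has to be careful that $V_{2,n}^T$ is $(n-k)\times n$, that "first column of $V_{2,n}^T$" means the column indexed by the representative node (after the permutation making it index $1$), and that orthogonality of $u$ to all of $v_1,\dots,v_{n-k}$ is genuinely equivalent to $u \in \mathcal{V}_{1,k}$ — this uses that $A_*$ is symmetric so its eigenvectors form an orthonormal basis of $\mathbb{R}^n$ and the top-$k$ and bottom-$(n-k)$ eigenspaces are mutually orthogonal complements. Once this equivalence is pinned down, the rest is a direct translation of Proposition \ref{prop4}.
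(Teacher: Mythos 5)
Your proposal is correct and follows essentially the same route as the paper: both identify membership in $\mathcal{V}_{1,k}$ with the orthogonality condition $V_{2,n}^T u = 0$, split off the coordinate fixed to $1$ to obtain the equality constraint $Wv=-w$, and reduce to Proposition \ref{prop4}. If anything, your version is slightly more complete, since you explicitly transport the minimizer and its uniqueness through the bijection $v \mapsto (1,v)$, a step the paper leaves implicit.
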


\begin{proof}
We recall that $V_{2,n}$ is the restriction of the eigenvectors matrix to the $n-k$ first  columns. Because the columns of this matrix  form an orthogonal basis, $v\in \mathcal{V}_{1,k}$ is equivalent to $V_{2,n}^T v=0$.
Thus, $\tilde{v}=(1,v)$ satisfies the equation: $V_{2,n}^{(1)} + V_{2,n}^{(n-k-1)} v$, where $V_{2,n}^{(1)}$ is the first row of $V_{2,n}$ and $V_{2,n}^{(n-k-1)}$ the matrix  $V_{2,n}$  whose first row has been deleted.

For all $ \tilde{v}=(1,v), $
 \begin{align*}
 V_{2,n}^T\tilde{v} & =V_{2,n}^{(1)} +  V_{2,n}^{(n-k-1)} v\\
 &=0 \\
  \Leftrightarrow \qquad V_{2,n}^{(n-k-1)} v &= -V_{2,n}^{(1)}
\end{align*}

Note that in Proposition \ref{prop5}, $V_{2,n}^{(1)}$ and  $V_{2,n}^{(n-k-1)}$ are denoted $w$ and $W$.

\end{proof}

%

\textbf{Remark:}  Constraint problem ($\tilde{\mathcal{P}}_1$) (Proposition \ref{prop5}) can be equivalently written as the following  penalized problem:  

\begin{center}
$\underset{v \in \mathbb{R}^{n-1}}{\arg\min}  \ {\|W v+w\|}_2^2 + \lambda {\| v \|}_1$.
\end{center} where $\lambda >0$ is the regularization parameter that controls the balance between the constraint and the sparsity norm, $W \in \mathbb{M}_{n-k,n-1}$ is the matrix $V_{2,n}^T$ whose first column $w$ has been deleted.

In the following, we will provide an algorithm based on the contraint problem ($\tilde{\mathcal{P}}_1$) introduced in Proposition \ref{prop5}.


\section{$\ell_1$-spectral clustering algorithm}\label{section5}

Now, we only consider graphs with  an exact cluster structure whose edges have been perturbed by a coefficient $p \in [0,1]$.

$\ell_1$-spectral clustering algorithm  is developed in a Matlab software. 
Starting with the number of blocks $k$ of an adjacency matrix $A$ and the column index of one representative element of each block $I_{n-k+1}, \dots, I_n$,  the pseudo-code for the algorithm is presented in  Algorithm \ref{algo}.

Steps from  $3$  to  $14$ are dedicated to the recovery of the indicators of connected components.
The minimization problem introduced in Section \ref{l1}  is solved using the $\ell_1$-eq function of the Matlab optimization package $\ell_1$-magic \cite{Candes05}. Vector $\tilde{v}_j$ contains the solution of the minimization problem (step 11).

To find the other connected components indicators, we add the constraint of being orthogonal to the previous computed vectors by deflating the matrix $A$ (step 13) and we do the same to estimate the other connected component indicators.

 Let $F$ be the concatenation of the vectors $\tilde{v}_j$. As the algorithm is applied on a perturbed adjacency matrix, the elements in $F$ are not exactly equal to one or zero but are very close to one for the indices associated to edges belonging to a same cluster and to zero for the remaining ones. Therefore, we shrink the solution (steps 16 to 20):
 
  For all $j=1,\dots,n$, for all $i=1,\dots k$, 
  
  $F_{ij}=\left\{
\begin{array}{ll}
1  \  \ \mbox{if}  \ F_{ij} >\frac{1}{2}, \\
0 \  \  \mbox{if}  \ F_{ij} \leq \frac{1}{2}.
\end{array}
\right.$

The indicators of the clusters are given by the $k$ column vectors of  $F$.

\begin{algorithm}[tb]
   \caption{$\ell_1$-spectral clustering algorithm}
   \label{algo}
\begin{algorithmic}[1]
   \STATE {\bfseries Input:} number of clusters  $k$ ,  adjacency matrix $A$,   indices of representative elements $Index=[I_{n-k+1},\dots, I_n]$. 
   \STATE Initialize $F=[]$.
   \STATE \COMMENT{Recovery of  the indicators of  the connected components}
   \FOR{$j=1$ {\bfseries to} $k$}
   \STATE Eigen decomposition $[V,U]$ of $A$: $A=VU^tV$.
      \STATE Sort in ascending order the eigenvalues and the associated eigenvectors of $A$.
        \STATE Form the matrix $V_{2,n}$ by stacking the $n-k+j-1$ eigenvectors associated to the smallest eigenvalues. 
        \STATE Computation of constraints of  the $l_1$ minimization problem
        \STATE Compute $T=V_{2,n}^t$
        \STATE Compute $W=T^{-Index[j]}$ and $w=T^{Index[j]}$(where $T^{Index[j]}$ is the column $Index[j]$ of $T$ and $V^{-I_j}$ the matrix $T$ wihtout the column  $Index[j]$)
        \STATE Compute  the solution $v^j$ of the following problem 
        \begin{center}
$\underset{Wv=-w}{\underset{v \in \mathbb{R}^{n-1}}{\arg\min}} {\|v\|}_1 $.
\end{center} 
\STATE Recovery of the $j_{th}$ cluster: 

 $\tilde{v}_j=[v^j_{1}  \ v^j_{2} \  \dots \ v^j_{Index[j] -1} \ 1 \ v^j_{Index[j]+1} \  \dots \  v^j_{n}] $
\STATE $F$ concatenation of the $j_{th}$ clusters and deflation of $A=A - \tilde{v}_j   ^t \tilde{v}_j$ to recover the other indicator vectors
\ENDFOR
\STATE $F((F>0.5))=1$
 \STATE $F((F \leq 0.5))=0$
   \STATE {\bfseries Output:} $k$ column vectors $F$.
\end{algorithmic}
\end{algorithm}

\section{ $\ell_1$-spectral clustering applications}\label{section6}

\subsection{Spectral clustering and $\ell_1$-spectral clustering on simulated dataset}

\subsubsection{Performances}

In Section \ref{section5}, we  introduced a new algorithm (called $\ell_1$-spectral clustering) that aims to detect cluster structures in complex graphs.    To  illustrate  graphically the  performances  of  this method,  we simulated a perturbed version of a graph with an exact group structure. The associated  adjacency matrix is composed of  $k \in [5,10]$ blocks of size $c_{n-k+1},\dots, c_n \in [10,20]$. Let $p$ be the level of Bernoulli noise applied on the adjacency matrix. 

 Once  the  matrix  is  disturbed by a strictly positive  coefficient,  we no longer  have  exact  block structures.  To recover it, we applied the traditional spectral clustering algorithm  and the new $\ell_1$-spectral clustering algorithm.  Figure \ref{recovery} gives the performances of both algorithm with a perturbation coefficient of $p=2$. 
 
\begin{figure}[ht]
\vskip 0.2in
\begin{center}
\centerline{\includegraphics[width=\columnwidth]{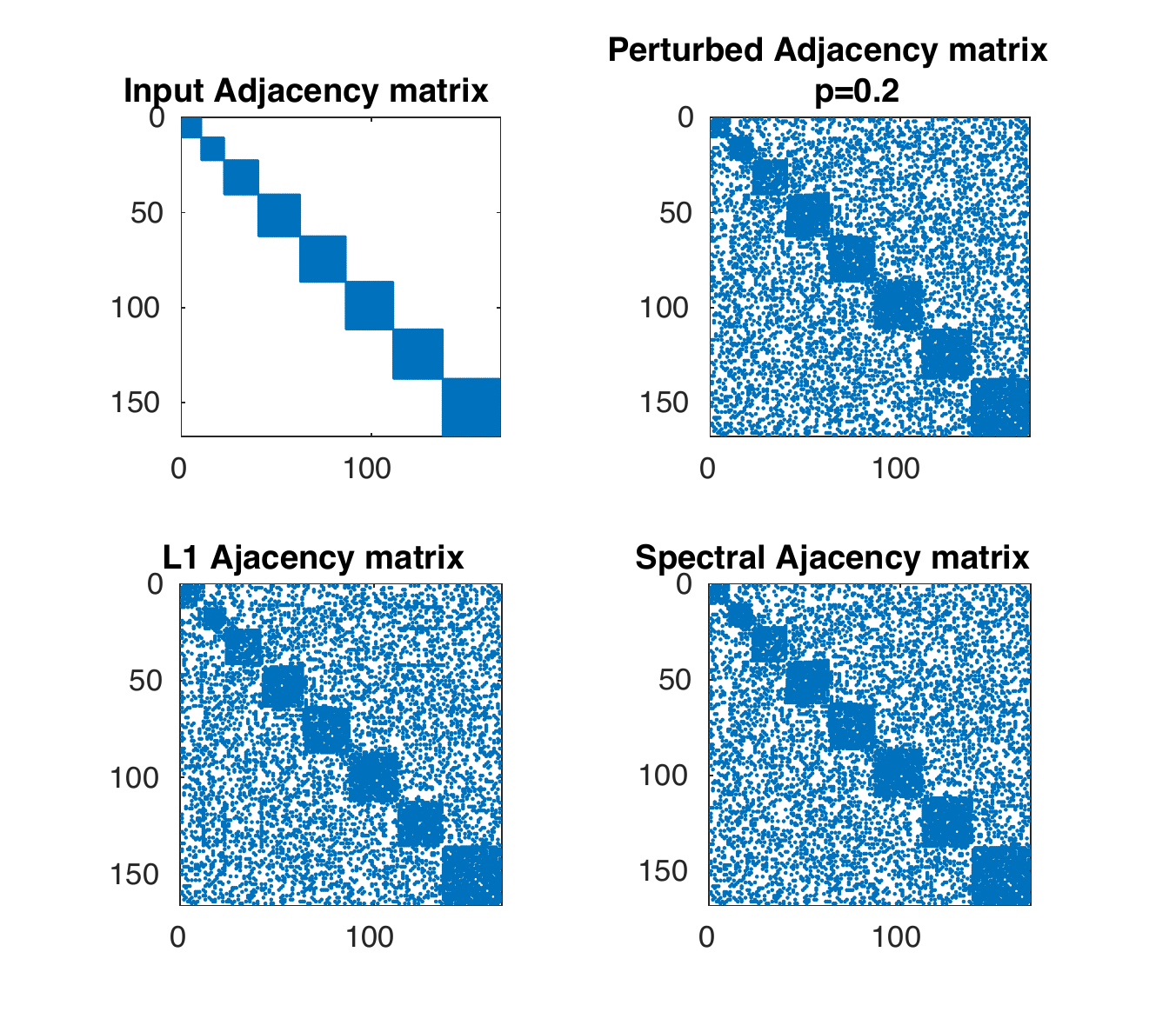}}
\caption{Input Adjacency matrix: Adjacency matrix with exact community structure. Perturbed Adjacency matrix $p=0.2,0.3$: Adjacency matrix after perturbation. L1 Adjacency matrix: Adjacency matrix recovery after $\ell_1$-spectral clustering application. Spectral Adjacency matrix: Adjacency matrix recovery after spectral clustering application.}
\label{recovery}
\end{center}
\vskip -0.2in
\end{figure}

We can notice that our model performs well in this task as both methods  effectively recovers the clustering structure, which indicates the robustness of our model.

\subsubsection{Robustness to perturbations}

Then, we tested the robustness under perturbations of the spectral clustering ang $l_1$-spectral clustering algorithms. Let $p$ be the level of Bernoulli noise, discretized in this section between $0$ and $0.4$. In this experiment, we simulate $100$  graphs with $k \in [5,10]$ clusters of size $c_{n-k+1},\dots, c_n \in [10,20]$. 

We introduce the block membership function: for all node $i  \in \left\{1,\dots, n \right\}$ of a graph $G(V,E)$ made of block structures of size $c_{n-k+1},\dots, c_n$, \begin{align*}
  \tau \colon & V \to \left\{n-k+1,\dots, n \right\}\\
  &i \mapsto c.
\end{align*} 

For each value of $p$, we test the perfomances of both algorithms to recover the clusters of the graphs.   The performances of the algorithms were evaluated by computing the percentage of missassigned nodes in average defined as $\frac{1}{100} \sum\limits_{j=1}^{100} |\left\{i\in V : \tau(i) \ne \hat{\tau}_j(i)\right\}|$, where $\tau_j$ is the block membership function and $\hat{\tau}_j$ is the estimated membership function for the $j$-th model. The results are plotted in Figure \ref{comparaison}.

\begin{figure}[ht]
\vskip 0.2in
\begin{center}
\centerline{\includegraphics[width=\columnwidth]{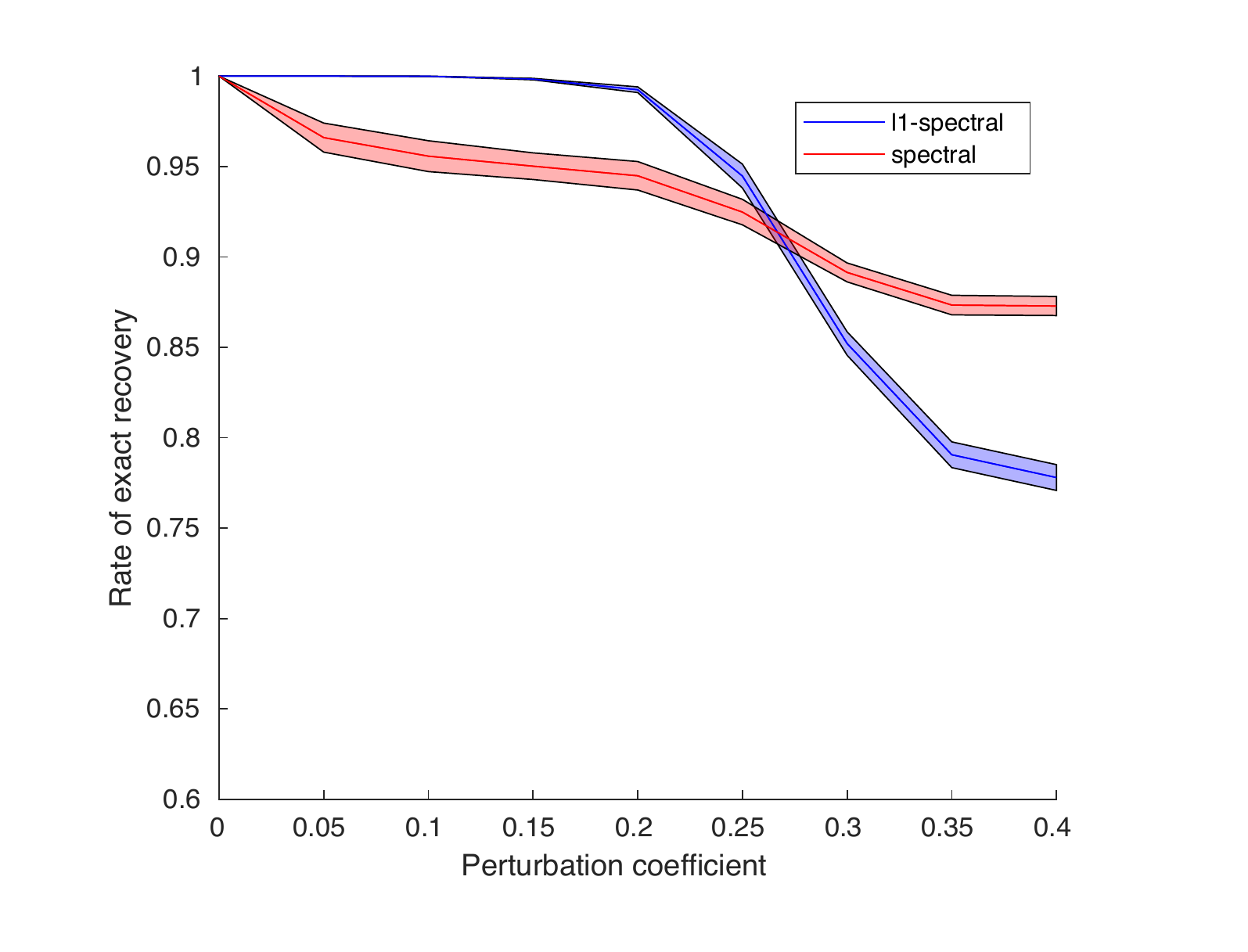}}
\caption{Fraction of nodes correctly classified using spectral (red line) and $\ell_1$-spectral clustering (blue line) under increasing perturbation coefficient. }
\label{comparaison}
\end{center}
\vskip -0.2in
\end{figure}

Figure \ref{comparaison} captures the fraction of nodes correctly classified and the associated region of confidence when $\ell_1$-spectral clustering (blue) and spectral clustering (red) are applied under increasing perturbation coefficient.

\subsubsection{Results}

Both simulations show that the perturbation coefficient has an impact on the performance of spectral clustering and $\ell_1$-spectral clustering. Moreover, we observe that $\ell_1$-spectral clustering works better  on  simulated  data  for small perturbations (up to $30\%$ Bernoulli noise) than spectral clustering. Thus, the new technique provides powerful results on small perturbations (rate of exact assignment is equal or very close to one).

\subsection{$\ell_1$-spectral clustering on real dataset}

\subsubsection{FLORINASH dataset}

This section is dedicated to experimental studies to assess the performances of our method through real dataset. Experiments have been performed on R using the packages igraph, PLNmodels \cite{Chiquet19}. The dataset we used belongs to the project FLORINASH that proposes an innovative research concept to address the role of intestinal microfloral activity in Non-Alcoholic Fatty Liver Disease (NAFLD).

Hepatic steatosis is  often observed in obese patients and is a preliminary stage  to non-alcoholic fatty liver disease.  The studied cohort \cite{Hoyles18} is made of   obese patients featured with hepatic steatosis. It  has been deeply studied and numerous clinical and biochemical data sets are available. We ran an ancillary study on $51$ control and $6$ diabetic patients with a median age of $42,50$ years, and characterized by a median body weight of $124,125$ kg  and a glycemia of $5.8,6.5$ mM.

The underlying dataset includes the output of  sequencing 16S rDNA gene from liver biopsies to study microbial composition and diversity of obese patients. The standard approach to analyzing 16S rDNA sequence data  relies on clustering reads by sequence similarity into Operational Taxonomic Units (OTUs). All OTUs are assigned to a taxonomic rank (phylum, class, order, family, genus and species). The standard way of representing the community structure inferred from microbial data is by means of an abundance table, where the rows correspond to samples ($57$ patients) and columns to features ($831$ microbial taxa).  The goal of this analysis is to detect clusters of OTUs at their family taxonomy level according to their abundance by patients ($53$ OTUs at this taxonomy).  Our aim is to identify the associations between the different microbial families by reconstructing the ecological network and make a direct comparisons between the two groups of patients.

\subsubsection{Results}

Microbiome data is compositional because the information that abundance tables contain is relative, the total number of counts per sample being highly variable. Few universal multivariate models are available for compositional  data and existing models often impose undesired constraints on the dependency structure. To tackle this issue, we use the Poisson Log Normal model \cite{Chiquet19}. We use the framework of graphical models  to model the dependency structure of the dataset. 

From the graph modeled, we deduce the adjacency matrix and the score of each underlying  hubs \cite{Kleinberg98}. A hub is a node with a number of links that greatly exceeds the average, also called a high degree node. A node is given a high hub score by linking to a large number of nodes.The number of hubs selected give us the total number of clusters do be detected. $l_1$-spectral clustering applied on the adjacency matrix  outputs $4$ (respectively $5$) clusters in control and diabetic patients (Figure \ref{comparaison}).

\begin{figure}[ht]
\vskip 0.2in
\begin{center}
\centerline{\includegraphics[width=\columnwidth]{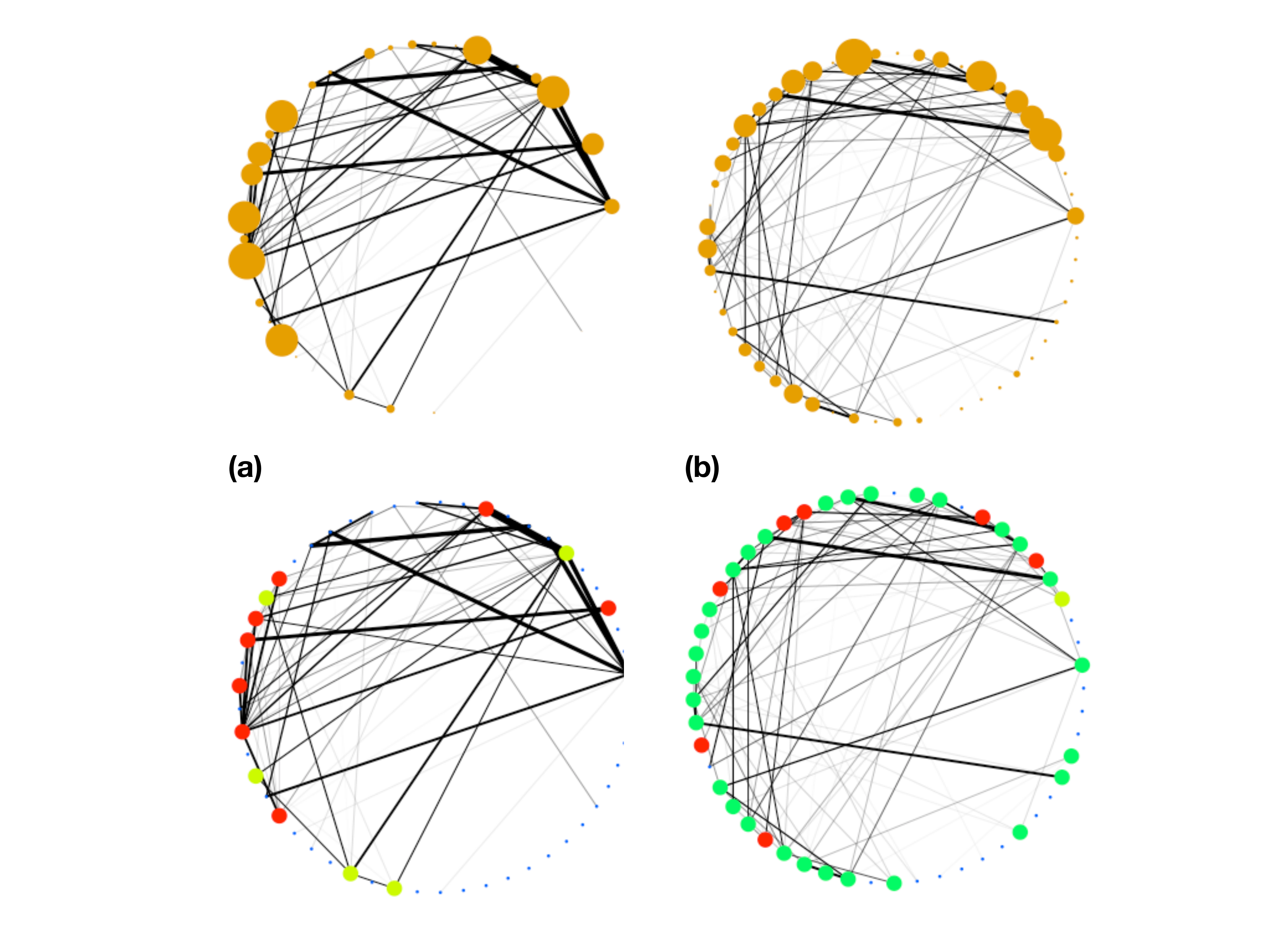}}
\caption{Application of $\ell_1$-spectral clustering on the cohort  composed of patients with and without diabetes. (a) Graph representing hubs and clusters related to  diabetic patients (b) Graph representing hubs and clusters related to healthy patients.}
\label{comparaison}
\end{center}
\vskip -0.2in
\end{figure}

\begin{figure}[ht]
\vskip 0.2in
\begin{center}
\centerline{\includegraphics[width=\columnwidth]{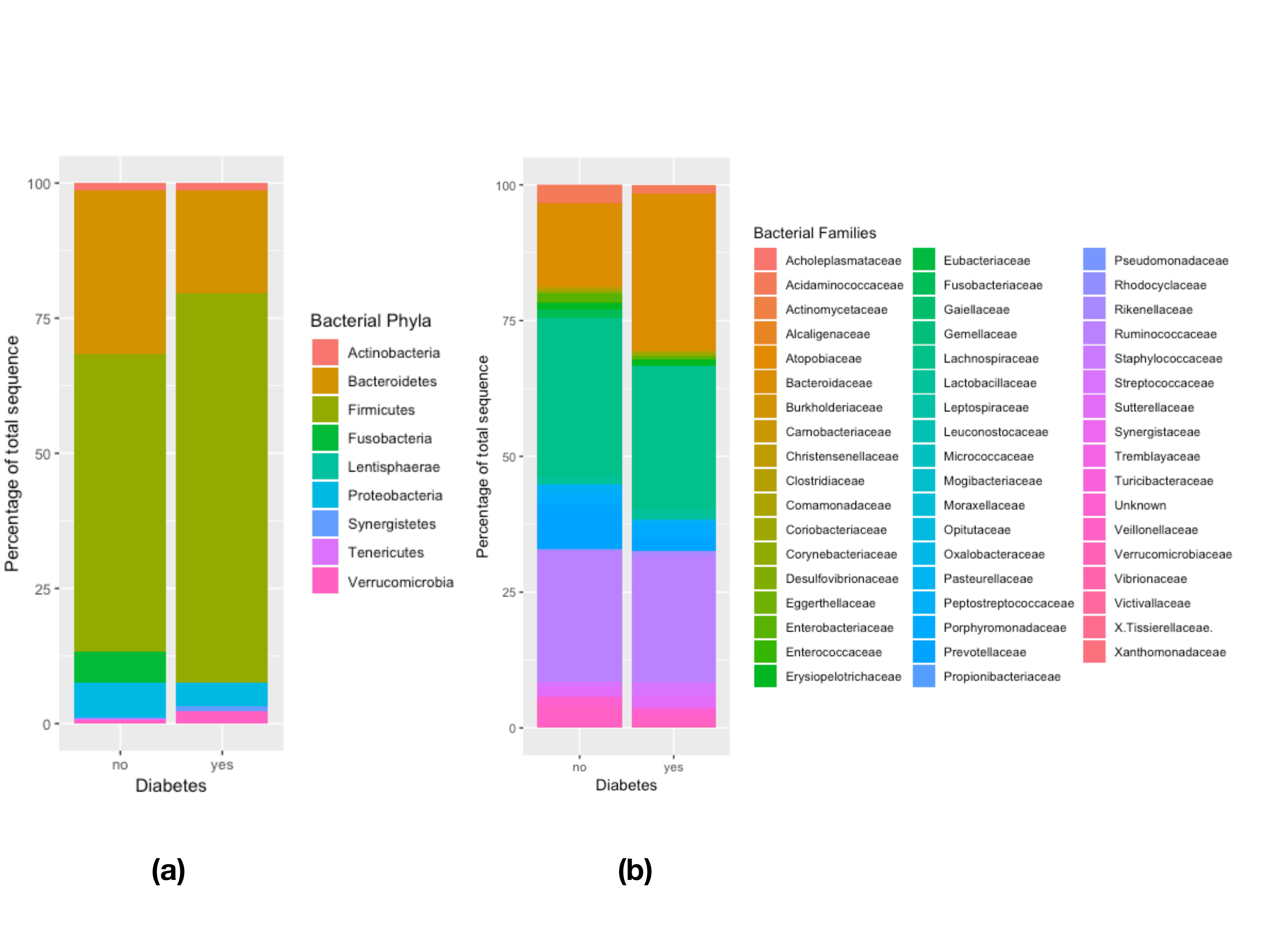}}
\caption{(a) Proportion of Phyla in control and diabetic patients (b) Proportion of Families in control and diabetic patients.}
\label{phyla}
\end{center}
\vskip -0.2in
\end{figure}

Figure \ref{phyla}   shows  that the gut microbiota from control and diabetic patients is characterized by two dominating phyla Firmicutes and Bacteroidetes (\cite{Ley05,Gill06}). We also added to the knowledge that in diabetic patients there was a disappearance of the frequency of the variable Verrucomicrobiaceae. This is also in agreement with the data from literature which demonstrate that this bacteria could be considered as a probiotic controlling metabolic diseases \cite{Everard13}. Eventually, from $\ell_1$-spectral clustering, we identified that a novel variable i.e. the Fusobacteriaceae are important discriminant signatures of the non diabetic group while that of the diabetic group is related to the Firmicutes variable. Altogether, our algorithm was validated by the previously published findings from the FLORINASH cohort and even add to the knowledge that some specific variables could be  associated with the diabetic or non diabetic signatures. 

\section{Conclusion}

We present $\ell_1$-spectral clustering,  a novel variation of spectral clustering algorithm based on promoting a sparse eigenvectors basis that provides information about the structure of the system observed. The associated graph is assumed to contain  connected subnetworks. We characterized the indicators of these subnetworks  as the ones that have the  minimal $\ell_1$-norm  with  respect  to  a  specific  restricted  space.   $\ell_1$-spectral clustering benefits from this feasible objective function as a substitution of the $k$-means step of the traditional spectral clustering. Its effectiveness and robustness to small noise perturbations  is confirmed by simulated and real examples.

\newpage 

\bibliography{example_paper}

\begin{thebibliography}{10}

\bibitem{Arthur07}
D.~Arthur and S.~Vassilvitskii.
\newblock k-means++: the advantages of careful seeding.
\newblock In {\em Proceedings of the 18h Annual ACM-SIAM SYmposium on Discrete
  Algorithms}, pages 1027--1035, 2007.

\bibitem{Candes05}
E.~Candes and J.~Romberg.
\newblock L1-magic: Recovery of sparse signals via convex programming.
\newblock Technical report, Caltech, 2005.

\bibitem{Chiquet19}
J.~Chiquet, M.~Mariadassou, and S.~Robin.
\newblock Variational inference for sparse network reconstruction from count
  data.
\newblock In {\em Proceedings of the 36th International Conference on Machine
  Learning (ICML 2019)}, 2019.

\bibitem{Chung97}
F.R.K. Chung.
\newblock {\em Spectral graph theory}.
\newblock American Mathematical Soc., 92 edition, 1997.

\bibitem{Erdos59}
P.~Erdös and A.~Rényi.
\newblock On random graphs.
\newblock {\em Publicationes Mathematicae}, 6:290--297, 1959.

\bibitem{Everard13}
A.~Everard.
\newblock Cross-talk between akkermansia mucunuphila and intestinal epithelium
  controls diet-induced obesity.
\newblock In {\em Proceedings of the national academy of sciences}, pages
  9066--9071, 2013.

\bibitem{Fortunato10}
S.~Fortunato.
\newblock Community detection in graphs.
\newblock {\em Physics reports}, pages 75--174, 2010.

\bibitem{Gill06}
S.R. Gill.
\newblock Metagenomic analysis of the human distal gut microbiome.
\newblock {\em Science}, 312:1355--1359, 2006.

\bibitem{Girvan02}
M.~Girvan and E.J.M Newman.
\newblock Community structure in social and biology networks.
\newblock In {\em Proceedings of the national academy of sciences}, pages
  7821--7826, 2002.

\bibitem{Handcock10}
M.S. Handcock and K.J. Gile.
\newblock Modeling social networks form sampled data.
\newblock {\em The Annals of Applied Statistics}, 4(1):5--25, 2010.

\bibitem{Hastie01}
T.~Hastie, R.~Tibshirani, and J.H. Friedman.
\newblock {\em The Elements of Statistical Learning}.
\newblock Springer, 2001.

\bibitem{Hopfield82}
J.J. Hopfield.
\newblock Neural network and physical systems with emergent collective
  computational abilities.
\newblock In {\em Proceedings of the national academy of sciences}, 1982.

\bibitem{Hoyles18}
L.~Hoyles.
\newblock Molecular phenomics and metagenomics of hepatic steatosis in
  non-diabetic obese women.
\newblock {\em Nature Medicine}, 24:1070--1080, 2018.

\bibitem{Jeong00}
H.~Jeong, R.~Albert B.~Tombor, Z.N. Oltvai, and A.L. Barabasi.
\newblock The large-scale organization of metabolic networks.
\newblock {\em Nature}, 407(6804):651--654, 2000.

\bibitem{Joseph16}
A.~Joseph and B.~Yu.
\newblock Impact of regularization on spectral clustering.
\newblock {\em The Annals of Statistics}, 44(4):1765--1791, 2016.

\bibitem{Kleinberg98}
J.~Kleinerg.
\newblock Authoritative sources in a hyperlinked environment.
\newblock In {\em Proceedings 9th ACM-SIAM Symposium on Discrete Algorithms},
  1998.

\bibitem{Lara19}
N.~De Lara and T.~Bonald.
\newblock Spectral embedding of regularized block models.
\newblock (arXiv:1912.10903 [cs.LG]), 2020.

\bibitem{Lattanzi19}
S.~Lattanzi and C.~Sohler.
\newblock A better k-means++ algorithm via local search.
\newblock In {\em Proceedings of the 36th International Conference on Machine
  Learning (ICML 2019)}, 2019.

\bibitem{Lei15}
J.~Lei and A.~Rinaldo.
\newblock Consistency of spectral clustering in stochastic block models.
\newblock {\em The annals of statistics}, 43(1):215--237, 2015.

\bibitem{Ley05}
R.E. Ley.
\newblock Obesity alters gut microbial ecology.
\newblock In {\em Proceedings of the national academy of sciences}, pages
  11070--11075, 2005.

\bibitem{Lloyd82}
S.~Lloyd.
\newblock Least square quantization in pcm.
\newblock {\em IEEE Transactions on Information Theory}, 28(2):129--132, 1982.

\bibitem{Luxburg07}
U.~Luxburg.
\newblock A tutorial on spectral clustering.
\newblock {\em Statistics and Computing}, 17(4):395--416, 2007.

\bibitem{Meunier09}
D.~Meunier, R.~Lambiotte, K.D.~Ersche A.~Fornito, and E.T. Bullmore.
\newblock Hierarchical modularity in human brainfunctional networks.
\newblock {\em Frontiers in neuroinformatics}, pages 3--37, 2009.

\bibitem{Newman04}
E.J.M. Newman and M.~Girvan.
\newblock Finding and evaluating community structure in networks.
\newblock {\em Physical review E}, 69(2):026--113, 2004.

\bibitem{Newman06}
M.E.J Newman.
\newblock Modularity and community structure in networks.
\newblock In {\em Proceedings of the National Academy of Sciences of the USA},
  pages 8577--8696, 2006.

\bibitem{Ng02}
A.Y. Ng, M.I. Jordan, and Y.~Weiss.
\newblock On spectral clustering: Analysis and an algorithm.
\newblock {\em In Advances in neural information processing systems}, pages
  849--856, 2002.

\bibitem{Pastor07}
R.~Pastor-Satorras and A.~Vespignani.
\newblock {\em Evolution and structure of the Internet: a statistical physics
  approach}.
\newblock Cambridge University Press, 2007.

\bibitem{Peng15}
R.~Peng, H.~Sun, and L.~Zanetti.
\newblock Partitioning well-clustered graphs: spectral clustering works!
\newblock In {\em Workshop and Conference Proceedings (JMLR 2015)}, pages
  1--33, 2015.

\bibitem{Pons05}
P.~Pons and M.~Latapie.
\newblock Computing communities in large networks using random walks.
\newblock {\em Computer and Information Sciences}, pages 284--293, 2005.

\bibitem{Pothen97}
A.~Pothen.
\newblock Graph partitioning algorithms with applications to scientific
  computing.
\newblock {\em Parallel Numerical Algorithms}, 4:323--368, 1997.

\bibitem{Ramirez13}
C.~Ramirez.
\newblock Why $\ell_1$ is a good approximation to $\ell_0$: A geometric
  explanation.
\newblock {\em Journal of uncertain systems}, 7:203--207, 2013.

\bibitem{Rohe11}
K.~Rohe and B.~Yu.
\newblock Spectral clustering and the high dimensional stochastic block model.
\newblock {\em The annals of statistics}, 39(4):1878--1915, 2011.

\bibitem{Seidman83}
S.B. Seidman.
\newblock Network structure and minimum degree.
\newblock {\em Social Networks}, 5:269--287, 1983.

\bibitem{Stewart90}
W.G. Stewart, J.~Sun, and B.H. Jovanovich.
\newblock {\em Matrix pertubation theory}.
\newblock Academic press New York, 175 edition, 1990.

\bibitem{Sussman12}
D.L. Sussman, M.~Tang, D.E. Fishkind, and C.E. Priebe.
\newblock A consistent adjacency spectral embedding for stochastic blockmodel
  graphs.
\newblock {\em Journal of the American Statistical Association},
  107(499):1119--1128, 2012.

\bibitem{Wang16}
X.~Wang, F.~Nie, and H.~Huang.
\newblock Structured doubly stochastic matrix for graph based clustering:
  Structured doubly stochastic matrix.
\newblock In {\em Proceedings of the 22nd ACM SIGKDD International Conference
  on Knowledge Discovery and Data Mining (KDD 2016)}, pages 1245--1254, 2016.

\bibitem{Wasserman94}
S.~Wasserman and K.~Faust.
\newblock {\em Social network analysis: Methods and applications}.
\newblock Cambridge university press, 8 edition, 1994.

\bibitem{Wu08}
X.~Wu.
\newblock Top 10 algorithms in data mining.
\newblock {\em Knowledge and Information Systems}, 14(1):1--37, 2008.

\bibitem{Xu19}
J.~Xu and K.~Lange.
\newblock Power k-means clustering.
\newblock In {\em Proceedings of the 36th International Conference on Machine
  Learning (ICML 2019)}, pages 6921--6931, 2019.

\bibitem{Zhang18}
Y.~Zhang and K.~Rohe.
\newblock Understanding regularized spectral clustering via graph conductance.
\newblock {\em In Advances in Neural Information Processing Systems}, pages
  10631--10640, 2018.

\end{thebibliography}
\bibliographystyle{plain}

%
%
%

\end{document}